\renewcommand{\Re}{\mathbb{R}}
\newcommand{\Nint}{\mathcal N_{\text{int}}}
\newcommand{\Nlf}{\mathcal N_{\text{leaf}}}
\newcommand{\T}{\mathcal T}
\newcommand{\N}{\mathcal N}
\newcommand{\chd}{\mathcal C}
\newcommand{\W}{\mathcal W}
\newcommand{\Q}{\mathcal O}
\newcommand{\Remp}{\mathbb{R}^{|\calK_{Z}|}_+}
\newcommand{\Renp}{\mathbb{R}^{|\calK_{Y}|}_+}
\newcommand{\JS}{\mathrm{JS}_{\Pi}}
\newcommand{\DKL}{\mathrm D_{\mathrm{KL}}}
\newcommand{\algoref}[1]{Algorithm~\ref{#1}}
\newcommand{\figref}[1]{Fig.~\ref{#1}}
\newcommand{\mfigref}[2]{Figs.~\ref{#1}-\ref{#2}}
\newcommand{\twoFigRef}[2]{Figs.~\ref{#1} and~\ref{#2}}
\newtheorem{theorem}{Theorem}[section]
\newtheorem{proposition}[theorem]{Proposition}
\newcommand{\calK}{{\cal K}}
\newcommand{\bbR}{\mathbb{R}}
\renewcommand\footnotemark{}
\renewcommand{\thesection}{\arabic{section}} 
\renewcommand{\thesubsection}{\Alph{subsection}}
\title{\LARGE \bf Information-theoretic Abstraction of Semantic Octree Models for Integrated Perception and Planning}
\date{}
\author{Daniel T. Larsson$^{1}$~~ Arash Asgharivaskasi$^{2}$~~ Jaein Lim$^{1}$~~ Nikolay Atanasov$^{2}$~~ Panagiotis Tsiotras$^{1}$
\footnote{This research was funded by Office of Naval Research award N00014-18-1-2375 and by the Army Research Laboratory under DCIST CRA W911NF-17-2-0181.}
\footnote{$^{1}$The authors are with the D. Guggenheim School of Aerospace Engineering, Georgia Institute of Technology, Atlanta, GA, 30332-0150, USA. {\tt\small $\{$dlarsson3,jaeinlim126,tsiotras$\}$@gatech.edu}}%
\footnote{$^{2}$The authors are with the Department of Electrical and Computer Engineering, University of California San Diego, San Diego, CA 92093, USA. {\tt\small $\{$aasghari,natanasov$\}$@ucsd.edu}}%
}
\begin{document} 

\maketitle

\begin{abstract}
In this paper, we develop an approach that enables autonomous robots to build and compress semantic environment representations from point-cloud data.
Our approach builds a three-dimensional, semantic tree representation of the environment from sensor data which is then compressed by a novel information-theoretic tree-pruning approach.
The proposed approach is probabilistic and incorporates the uncertainty in semantic classification inherent in real-world environments.
Moreover, our approach allows robots to prioritize individual semantic classes when generating the compressed trees, so as to  design multi-resolution representations that retain the relevant semantic information while simultaneously discarding unwanted semantic categories.
We demonstrate the approach by compressing semantic octree models of a large outdoor, semantically rich, real-world environment.
In addition, we show how the octree abstractions can be used to create semantically-informed graphs for motion planning, and provide a comparison of our approach with uninformed graph construction methods such as Halton sequences.
\end{abstract}

\section{Introduction}

Dense volumetric environment representations such as occupancy grid maps~\cite{occ_mapping_1, occ_mapping_2}, multi-resolution hierarchical models~\cite{quadtree, octomap}, and signed distance fields (SDF)~\cite{isdf, sdf}, provide valuable information to both human and autonomous robots, as evidenced by their utility in search and rescue~\cite{search-and-rescue}, safe navigation~\cite{navigation-using-dense-map}, and terrain modeling~\cite{traversability-analysis}.
Moreover, the inclusion of semantic information, such as in metric-semantic SLAM methods of~\cite{zobeidi-GP,semantic-octomap,kimera}, allows robots to build more sophisticated world models by affording autonomous systems the ability to not only discern occupied from free space, but to also distinguish between the types of objects in their surroundings.
As evidence of their usefulness, recent frameworks have leveraged the power of Bayesian statistics to develop algorithms that build semantic environment representations that encode categorical (semantic) information using probabilistic methods which naturally capture the uncertainty robots hold regarding their world~\cite{zobeidi-GP,semantic-octomap}.
These models supply autonomous robots an abundance of information, enabling them to intelligently reason about their surroundings with details such as the location, geometry and size of obstacles, or the presence of humans, cars, and other semantic information.
While constructing environment models is an important step for intelligent autonomy, sensors often provide an overabundance of information for specific tasks.
It is therefore of interest to not only \emph{build} environment models but also \emph{compress} them to form abstracted representations, allowing robots to focus their (possibly scarce) resources on the relevant aspects of the operating domain.
The use of abstractions in the form of multi-resolution environment model compressions have seen widespread deployment in the autonomous systems community.
Examples include~\cite{Hauer2015,Cowlagi2010,Larsson2021:AbsPlan,Kambhampati1986,lim2020mams,Boutilier1994}, where abstractions are leveraged in order to alleviate the computational cost of planning and decision making in both single and multi-robot applications.
Abstractions have also been utilized to reduce the memory required to store environment representations~\cite{Kraetzschmar2004,Einhorn2011} and to alleviate the computational complexity of evaluating cost functions in active-sensing applications~\cite{Nelson2018}.
However, while identifying the relevant aspects of a problem to generate task-relevant abstractions has long been considered vital to intelligent reasoning~\cite{Zucker2003,brooks1991intelligence,Sacerdoti1974,Giunchiglia1992,holte2003abstraction,Ponsen2010}, the means by which they are generated has traditionally been heavily reliant on user-provided rules.
To this end, a number of studies have considered the generation of task-relevant abstractions for control and decision-making that model (relevant) information via the statistics of the process.
Examples of such works include~\cite{Genewein2015,Tishby2010,pacelli2019task}, where ideas from information theory, specifically rate-distortion~\cite{gallager1968information} and the information bottleneck method~\cite{Tishby1999:IB}, are employed to develop approaches that identify and preserve task-relevant information by modeling the relevant information as a random variable that is correlated with the source (i.e., the original representation).  
While the above studies consider frameworks which form abstractions that preserve relevant information, they do not result with representations of any particular structure (e.g., quadtrees, octrees).
To address this issue, the work of~\cite{larsson2020q,larsson2021HCinformation} uncovered connections between hierarchical tree structures and signal encoders to formulate an information-theoretic compression problem that allows optimal task-relevant tree abstractions to be obtained as a solution to an optimization problem.
Moreover, extensions of the tree abstraction problem from~\cite{larsson2020q} that considers generating tree abstractions in the presence of both relevant and irrelevant information sources has recently appeared in the literature~\cite{larsson2022generalized}.
Importantly, the frameworks developed in~\cite{larsson2020q,larsson2021HCinformation,larsson2022generalized} require minimal input from system designers and enable robots to generate compressed tree representations of their world that are driven by task-specific information.

The goal of this paper is to bridge the gap between map building and abstraction construction by developing a framework that performs both tasks simultaneously.
The proposed approach employs an information-theoretic tree compression method to find provably optimal tree abstractions of large environments that both retain information regarding task-relevant semantic classes and remove those that are considered task-irrelevant.
We demonstrate our approach in a real-world outdoor environment, and show how the framework can be employed to generate semantically-informed (colored) graphs to reduce the computational effort required for motion planning.
%

\section{Problem Statement}
\label{sec:problem_statement}

Let \((\Omega,\mathcal F,\mathbb P)\) be a probability space.
The \emph{Shannon entropy}~\cite[p.~14]{Cover2006:ElementsITBook} of a discrete random variable \(X: \Omega \rightarrow \bbR\) with probability mass function \(p(x) = \mathbb P\{\omega\in\Omega:X(\omega) = x\}\) is denoted \(H(X)\).\footnote{if the distribution \(p(x)\) is understood from context we may write \(H(p)\) in place of \(H(X)\).}
Provided two distributions \(p(x)\) and \(\bar p(x)\) over the same set of outcomes, the \emph{Kullback-Leibler divergence}~\cite[p.~19]{Cover2006:ElementsITBook} is \(\DKL(p(x),\bar p(x)) = \sum_x p(x) \log [p(x)/\bar p(x)] \).
Given a collection of distributions \(p_1(x),\ldots,p_l(x)\) over the same set of outcomes, the \emph{Jensen-Shannon divergence}~\cite{Lin1991} with respect to the weights \(\Pi\in\Re_+^n\) is given by \(\JS(p_1(x),\ldots,p_l(x)) = \sum_{i=1}^{l} \Pi_i \DKL(p_i(x),\bar p(x))\), where \(\bar p(x) = \sum_{i=1}^{l} \Pi_i p_i(x)\), \(0 \leq \Pi_i \leq 1\) for all \(1\leq i\leq l\) and \(\sum_{i=1}^{l} \Pi_i = 1\).
We assume a grid-world representation of the environment \(\W \subset \Re^3\), where each cell contains semantic information regarding one or more of \(K\) possible semantic classes contained in the set \(\calK = \{0,1,\ldots,K\}\).
In the sequel, we let the semantic class $0\in\calK$ denote free space and each \(k\in\calK\setminus\{0\}\) represent a distinct semantic category (e.g., building, vegetation, road, etc.).
A hierarchical, three-dimensional ($3$-D) multi-resolution octree representation of $\W$ is a tree\footnote{a tree is an a-cyclic connected graph~\cite{Bondy1976}.} $\T$ consisting of a set of nodes \(\N(\T)\) and edges \(\mathcal E(\T)\) that describe the node interconnections, where each non-leaf node in the tree has exactly $8$ children.
We denote the set of children of any node \(n \in \N(\T)\) by \(\chd(n)\), the leaf nodes by \(\Nlf(\T)\), and the interior nodes by \(\Nint(\T)\)~\cite{larsson2022generalized}. 
Lastly, we let \(\T_\W\) denote the finest-resolution octree representation of \(\W\); that is, \(\T_\W\) is the octree whose leaf nodes coincide with the unit cells of \(\W\).  
Our goal is to develop a perception and abstraction approach that allows for semantic octree representations to be built from sensor data while simultaneously optimally compressed in a low-cardinality tree data structure.
For this, we require two components: (i) the source (i.e., the quantity to be compressed) and (ii) any relevant or irrelevant information that is to be retained or removed, respectively.
To this end, the source, relevant and irrelevant information are represented by the random variables \(X:\Omega\to\Nlf(\T_\W)\) with distribution \(p(x)\) (i.e., the finest-resolution cells), \(Y_i: \Omega \to \{0, 1\}\), \(i \in \calK_{Y}\) and \(Z_j:\Omega \to \{0, 1\}\), \(j \in \calK_{Z}\), respectively, where the sets \(\calK_{Y}\) and \(\calK_{Z}\) are subsets of $\calK$ that contain the indices of the relevant and irrelevant semantic classes.
The relationship between source, relevant and irrelevant information is specified by \(p(x, y_{1:|\calK_{Y}|}, z_{1:|\calK_{Z}|})\).
We consider the following problem.
\textit{Semantic Octree Building-Compression:}
Given an octree representation \(\T_\W\) and the joint probability distribution \(p(x, y_{1:|\calK_{Y}|}, z_{1:|\calK_{Z}|})\) from perceptual data, find a compressed multi-resolution octree \(\T\) from \(\T_\W\) by solving the problem:
\begin{equation}\label{eq:GeneralTreeSrchProblem}
    \max_{\T\in\T^\Q} \sum_{i \in \calK_{Y}} \beta_i I_{Y_i}(\T) - \sum_{j \in \calK_{Z}} \gamma_j I_{Z_j}(\T) - \alpha I_X(\T),
\end{equation}
where $\T^\Q$ is the space of all octree representations of $\W$, \(\beta\in \bbR_{+}^{|\calK_{Y}|}\), \(\gamma\in \bbR_{+}^{|\calK_{Z}|}\) and \(\alpha \in \Re_+\) specify the relative importance of relevant information retention, irrelevant information removal, and compression, respectively, and the functions \(I_{Y_i}:\T^\Q \to \Re_+,~I_{Z_j}:\T^\Q \to \Re_+\) and \(I_X:\T^\Q\to\Re_+\) quantify the amount of relevant, irrelevant and compression information contained in the octree (see~\cite[p.~9]{larsson2022generalized}).
Note that \(p(x, y_{1:|\calK_{Y}|}, z_{1:|\calK_{Z}|})\) enters into~\eqref{eq:GeneralTreeSrchProblem} via \(I_{Y_i}(\T)\), \(I_{Z_j}(\T)\) and \(I_X(\T)\).
%

\section{Information-theoretic Abstraction of Semantic Octrees} \label{sec:solnApproach}

We first provide background on tree compression. We will employ the G-tree search algorithm \cite{larsson2022generalized} to solve the compression problem in \eqref{eq:GeneralTreeSrchProblem}.
The goal of the G-tree search algorithm is to find a compressed representation \(N:\Omega \to \Nlf(\T)\), $\T \in \T^\Q$, of the source \(X\) (i.e., the leaf cells of $\T_\W$) in the form of an octree \(\T\) of \(\W\) according to \eqref{eq:GeneralTreeSrchProblem}, where the distribution \(p(n)\) is related to the source according to:
\begin{equation}
    p(n) = \sum_{x\in \Nlf(T_{\W(n)})}p(x),
\end{equation}
and \(\Nlf(\T_{\W(n)})\subseteq \Nlf(\T_\W)\) are the leaf nodes of the subtree of $\T_\W$ rooted node \(n\in\Nlf(\T)\).
Since the octree solution \(\T\in\T^\Q\) to~\eqref{eq:GeneralTreeSrchProblem} is not known a-priori, we may compute the value of \(p(n)\) for all nodes \(n\in\N(\T_\W)\) recursively according to
\begin{equation}\label{eq:parentPx}
    p(n) = \sum_{n' \in \chd(n)} p(n').
\end{equation}

The objective of G-tree search is to determine which nodes of the original octree \(\T_\W\) should be leaf nodes of the compressed representation $\T$.
To accomplish its goal, G-tree search exploits the structure of problem~\eqref{eq:GeneralTreeSrchProblem} to devise a node pruning rule, called the G-function, defined by:
\begin{align}\label{eq:fullGfunction}
&G(n;\beta,\gamma,\alpha) = \nonumber  \\
&~~~~~\max\{\Delta J(n;\beta,\gamma,\alpha) + \sum_{n'\in\chd(n)} G(n';\beta,\gamma,\alpha), ~ 0\},
\end{align}
if \( n\in\Nint(\T_\W)\) and \(p(n) > 0\), and \(G(n;\beta,\gamma,\alpha) = 0\) otherwise.
The function \(\Delta J(n;\beta,\gamma,\alpha)\) is the one-step reward for expanding node \(n\in\Nint(\T_\W)\) and is given by 
\begin{align}
&\Delta J(n;\beta,\gamma,\alpha) = \nonumber \\
&~~~~ \sum_{i \in \calK_Y} \beta_i \Delta I_{Y_i}(n) - \sum_{j \in\calK_Z} \gamma_j \Delta I_{Z_j}(n) - \alpha \Delta I_X(n), \label{eq:oneStepIncrementalCostReward}
\end{align}
where the functions \(\Delta I_{Y_i}(n)\), \(\Delta I_{Z_j}(n)\) and \(\Delta I_X(n)\) quantify the incremental amount of relevant, irrelevant and compression information, respectively, contributed by the node \(n\in \Nint(\T_\W)\).
These functions are, in turn, defined by:
\begin{align}
\Delta I_{Y_i}(n) &= p(n) \JS(p(y_i|n'_1),\ldots,p(y_i|n'_{\lvert \chd(n) \rvert})), \label{eq:deltaIy}\\
\Delta I_{Z_j}(n) &= p(n) \JS(p(z_j|n'_1),\ldots,p(z_j|n'_{\lvert \chd(n)\rvert})), \label{eq:deltaIz}\\
\Delta I_X(n) &= p(n) H(\Pi), \label{eq:deltaIx}
\end{align}
where \(p(y_i|n'_u)\) for \(n'_u\in\chd(n)\) are recursively computed via
\begin{align}
p(y_i|n'_u) &= \sum_{n''\in\chd(n'_u)} \Pi_{n''} p(y_i | n''),~i\in\calK_Y, \label{eq:parentChildRelConditional} \\
p(z_j|n'_u) &= \sum_{n''\in\chd(n'_u)} \Pi_{n''} p(z_j | n''),~j\in\calK_Z,\label{eq:parentChildIrrelConditional} 
\end{align}
and \(\Pi\in\Re_{+}^{\lvert \chd(n'_u)\rvert}\) has entries \(\Pi_{n''} = \nicefrac{p(n'')}{p(n'_u)}\).
Once the G-values~\eqref{eq:fullGfunction} are known from an inverse breadth-first node traversal of $\T_\W$, G-tree search considers nodes \(n\in\Nint(\T_\W)\) in a top-down manner to determine whether or not they should be part of the solution to~\eqref{eq:GeneralTreeSrchProblem}.
See~\cite{larsson2022generalized} for more details regarding the G-tree search algorithm.
Careful inspection of~\eqref{eq:fullGfunction} and~\eqref{eq:deltaIy}-\eqref{eq:deltaIx} reveals that G-tree search depends on \(p(x,y_{1:|\calK_{Y}|}, z_{1:|\calK_{Z}|})\) only via the distributions \(p(y_i|x)\), \(p(z_j|x)\) and \(p(x)\).
Thus, we assume that a distribution \(p(x)\) over leaf nodes is provided, and determine \(p(y_i|x)\) and \(p(z_j|x)\) from semantic octree perception data.
Our solution  consists of two phases: (i) \emph{the update pass}: inserts or updates nodes in the current octree based on semantic perception data, and (ii) \emph{the octree compression pass}: executes G-tree search to compress the environment representation that is built as part of phase 1.
Next, we describe the tree-building process before delineating how \(p(y_i|x)\) and \(p(z_j|x)\) are obtained from perception data.
%

\subsection{Updating Tree Nodes from Perceptual Data}\label{subsec:semanticTree}

We adopt the semantic model proposed in~\cite{semantic-octomap} to build a hierarchical Bayesian multi-class octree representation of the world.
The tree-building algorithm builds the finest resolution octree \(\T_\W\) from observations (see~\figref{fig:octree_update}), and maintains a truncated probability distribution over semantic classes, represented by a random variable \(S:\Omega\to\calK\) for each leaf node $x\in \Nlf(\T_\W)$.
In more detail, given a node \(x\in \Nlf(\T_\W)\), if \(k\in \calK_3(x) \cup \{0\}\) then \(p(S = k|x)\) is provided by the octree, where \(\calK_3(x)\) is the set of three most likely classes of node $x$.
The octree also stores a forth entry, corresponding to \(p(S\in\calK \setminus (\calK_3(x) \cup \{0\}) |x)\), which is the probability of the event that the node \(x\) belongs to a semantic class other than the three most likely or free space.
Furthermore, to reduce the memory required to store the map, the algorithm will prune nodes whose children all have identical multi-class probability distributions.
\begin{figure}[t]
    \centering
    \includegraphics[width=\linewidth]{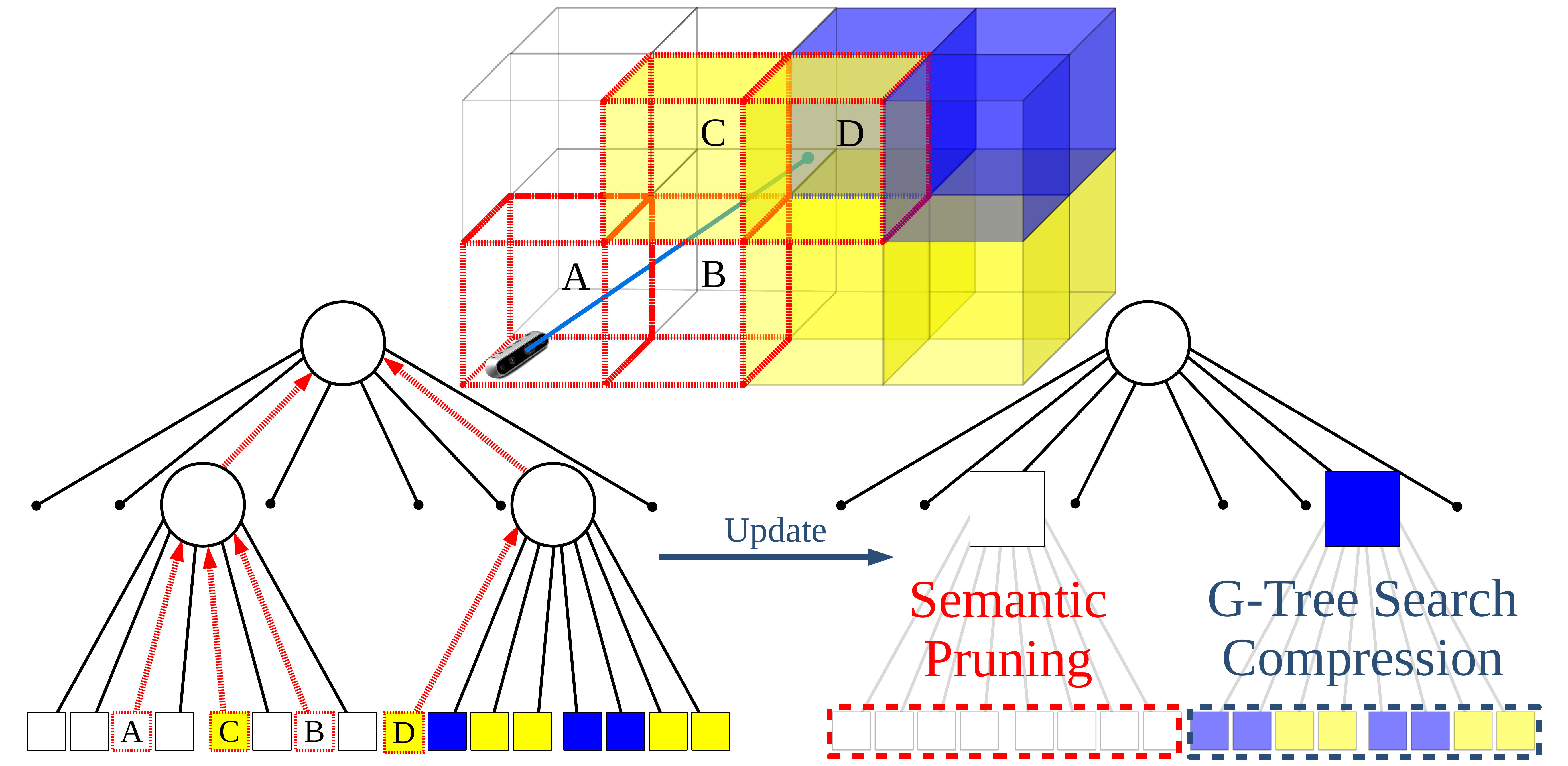}
    \caption{Semantic octree update from a new observation indicating cell D is blue and C is free space. Each leaf node with color-encoded object class is depicted as a square, inner nodes are represented by circles, and unexplored cells are shown as dots. Multi-class probabilities and G-values along the paths from leaf nodes to the root, highlighted in red, need to be updated due to the new perceptual data.}
    \label{fig:octree_update}
\end{figure}

\subsection{Extracting Semantic Information from the Octree Data Structure}\label{subsec:extracSemanticDistLeafs}

For any \(i\in\calK_Y\), the conditional distribution \(p(y_i|x)\)  is derived from the semantic octree according to \(p(y_i=1|x) = p(S=i|x)\), with an analogous expression for \(p(z_j|x)\), \(j\in\calK_Z\).
In practice, obtaining the conditional distribution \(p(y_i|x)\) (resp. \(p(z_j|x)\)) presents a challenge, since the tree-building algorithm only maintains the truncated semantic probabilities.
Thus, we cannot directly obtain \(p(y_i|x)\) or \(p(z_j|x)\) from the semantic octree structure, as the $3$ most likely semantic classes may differ from node to node.
Instead, we must generate the distribution \(p(s|x)\) over \emph{all} semantic classes.
To this end, for each leaf node \(x\in\Nlf(\T_\W)\), we collect the truncated semantic probabilities and uniformly distribute the probability \(p(S\in\calK \setminus (\calK_3(x) \cup \{0\}) |x)\) among the $K - 3$ outstanding classes.
Once the distributions \(p(y_i|x)\), \(p(z_j|x)\), and \(p(x)\) are known for the leaf nodes of the octree \(\T_\W\), we may apply the recursive relations~\eqref{eq:parentPx} and~\eqref{eq:parentChildRelConditional}-\eqref{eq:parentChildIrrelConditional} to update the semantic distributions for all interior nodes.
It is worth mentioning that each node \(n\in\Nint(\T_\W)\) may not always have a full set of children since the tree-building algorithm instantiates nodes only for the observed locations in the environment.
To see why missing children pose a challenge, assume node \(n\in\Nint(\T_\W)\) has only a single child; that is \(\chd(n) = \{n'\}\).
From~\eqref{eq:parentPx} and~\eqref{eq:deltaIy}-\eqref{eq:parentChildIrrelConditional}, we see that if \(\chd(n) = \{n'\}\), then we have \(\Delta I_X(n) = 0\), \(\Delta I_{Y_i}(n) = 0\) and \(\Delta I_{Z_j}(n) = 0\); so no information is lost in aggregating the child node \(n'\) to its parent \(n\).
In order to remedy this issue, we account for absent children by instantiating a maximum entropy distribution (i.e., uniform over $\calK$) and a value of \(p(n)\) for nodes not represented in the octree, from which \(p(y_i|x)\) and \(p(z_j|x)\) are obtained analogously to the existing leaf nodes.
Importantly, the recursive structure of~\eqref{eq:parentPx} and \eqref{eq:parentChildRelConditional}-\eqref{eq:parentChildIrrelConditional} imply that \(p(y_i|n'),~p(z_j|n')\) and \(p(n')\) for missing child nodes \(n'\) need only be instantiated to compute the values of their parent \(n\) and not explicitly represented in the octree.
%

\subsection{Updating G-values from Local Tree Information}\label{subsec:updateGvals}

From~\eqref{eq:parentChildRelConditional}-\eqref{eq:parentChildIrrelConditional} we note that computing \(p(y_i|n)\) and \(p(z_j|n)\) for any \(n\in\Nint(\T_\W)\) can be done from knowledge of \(p(y_i|n')\) and \(p(z_j|n')\) for \(n'\in\chd(n)\).
Moreover, these updates do not require \(p(x)\) to be a valid probability distribution, since~\eqref{eq:parentChildRelConditional}-\eqref{eq:parentChildIrrelConditional} depend only the \emph{relative} weights \(\nicefrac{p(n')}{p(n)}\).
However, this structure is not shared by the G-function~\eqref{eq:fullGfunction}, since the latter has an explicit dependence on \(p(x)\).
It is therefore of interest to investigate if characteristics of the G-function, or some equivalent, can be expressed in terms of only relative weights.
To this end, we define \(G_\Pi:\N(\T_\W)\times\Renp\times\Remp\times\Re_+ \to \Re_+\) according to: if \(n\in\Nint(\T_\W)\) and \(p(n) > 0\) then
\begin{align}
	&G_\Pi(n;\beta,\gamma,\alpha) \nonumber \\
	&~~~=\max\{\sum_{i\in\calK_Y}\beta_i \JS^{Y_i}(n) - \sum_{j\in\calK_Z}\gamma_i\JS^{Z_j}(n) - \alpha H(\Pi) \nonumber \\
	&~~~~~~~~~~~~~~~~~~~ + \sum_{n'\in\chd(n)}\Pi_{n'}G_\Pi(n';\beta,\gamma,\alpha),~0\}, \label{eq:localGfunction}
\end{align}
and \(G_\Pi(n;\beta,\gamma,\alpha) = 0\) otherwise, where for \(i\in\calK_Y\), \(\JS^{Y_i}(n) = \JS(p(y_i|n'_1),\ldots,p(y_i|n_{\lvert \chd(n)\rvert})\), \(n'_u\in\chd(n)\) with \(\JS^{Z_j}(n)\) defined analogously.
This brings us to the following result.
\begin{proposition}\label{prop:localGFunction}
	Let \(n\in\N(\T_\W)\).
	Then \(G_{\Pi}(n;\beta,\gamma,\alpha) > 0\) if and only if \(G(n;\beta,\gamma,\alpha) > 0\).
\end{proposition}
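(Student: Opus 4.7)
The plan is to establish the stronger identity
\[
  G(n;\beta,\gamma,\alpha) \;=\; p(n)\,G_{\Pi}(n;\beta,\gamma,\alpha) \qquad \text{for every } n\in\N(\T_\W),
\]
from which the biconditional follows immediately: if $p(n)>0$ then the two quantities have the same sign, and if $p(n)=0$ then both definitions give $0$ by fiat, so neither can be strictly positive. The biconditional is then just a matter of observing that the ``otherwise'' clauses in \eqref{eq:fullGfunction} and \eqref{eq:localGfunction} coincide (a node is a leaf, or it has $p(n)=0$).

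\textbf{Induction setup.} I would prove the identity by structural (bottom-up) induction on $\T_\W$. The base case covers leaves of $\T_\W$ and any node with $p(n)=0$: here both $G$ and $G_\Pi$ are defined to be $0$, so the identity holds trivially. Note that if $p(n)=0$ for an interior node, then by \eqref{eq:parentPx} every child also has $p(n')=0$, so the recursion at such a node only ever refers back to the base case, consistently giving $0=p(n)G_\Pi(n)$.

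\textbf{Inductive step.} For an interior node $n$ with $p(n)>0$, assume the identity holds for every $n'\in\chd(n)$. Substitute \eqref{eq:deltaIy}--\eqref{eq:deltaIx} into \eqref{eq:oneStepIncrementalCostReward} to obtain
\[
  \Delta J(n;\beta,\gamma,\alpha) \;=\; p(n)\Bigl[\sum_{i\in\calK_Y}\beta_i\JS^{Y_i}(n) - \sum_{j\in\calK_Z}\gamma_j\JS^{Z_j}(n) - \alpha H(\Pi)\Bigr].
\]
Plugging this together with the inductive hypothesis $G(n')=p(n')G_\Pi(n')$ into \eqref{eq:fullGfunction} and using $\Pi_{n'}=p(n')/p(n)$ yields
\begin{align*}
 G(n) &= \max\Bigl\{p(n)\Bigl[\sum_i\beta_i\JS^{Y_i}(n)-\sum_j\gamma_j\JS^{Z_j}(n)-\alpha H(\Pi)\Bigr] + \sum_{n'\in\chd(n)}p(n')G_\Pi(n'),\,0\Bigr\} \\
      &= p(n)\,\max\Bigl\{\sum_i\beta_i\JS^{Y_i}(n)-\sum_j\gamma_j\JS^{Z_j}(n)-\alpha H(\Pi)+\sum_{n'\in\chd(n)}\Pi_{n'}G_\Pi(n'),\,0\Bigr\},
\end{align*}
where factoring $p(n)>0$ out of the $\max$ is valid because $p(n)$ is strictly positive. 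The right-hand side is precisely $p(n)G_\Pi(n;\beta,\gamma,\alpha)$, completing the induction.

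\textbf{Anticipated obstacle.} The only real subtlety is making sure the case $p(n)=0$ is handled cleanly, since then $\Pi_{n'}=p(n')/p(n)$ is formally undefined; the observation that $p(n)=0$ forces $p(n')=0$ throughout the subtree (so the local definition $G_\Pi(n)=0$ is consistent and the recursion never needs $\Pi$ at such a node) resolves this. Everything else is a direct manipulation of the definitions, so no new estimates are needed beyond factoring $p(n)$ through the $\max$.
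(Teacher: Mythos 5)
Your proposal is correct and follows essentially the same route as the paper: both establish the identity \(G(n;\beta,\gamma,\alpha)=p(n)G_{\Pi}(n;\beta,\gamma,\alpha)\) by bottom-up induction (factoring \(p(n)>0\) through the \(\max\) in the inductive step) and handle the \(p(n)=0\) case separately before reading off the biconditional. Your explicit remark that \(p(n)=0\) propagates to all descendants, making the zero-probability case uniformly part of the base case, is a slightly tidier bookkeeping of what the paper does with its set \(\mathcal S\) of positive-probability children, but the argument is the same.
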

\begin{proof}
	The proof is given in the Appendix.
\end{proof}
As a result of Proposition~\ref{prop:localGFunction}, we may predicate our pruning on the function \(G_{\Pi}(n;\beta,\gamma,\alpha)\) in place of \(G(n;\beta,\gamma,\alpha)\) without sacrificing any of the theoretical guarantees of the G-tree search method.
In the next section, we present the joint tree-building and compression algorithm. 
%

\subsection{The Joint Tree-Building \& Compression Algorithm}\label{subsec:jointBuildCompressAlgo}

The joint semantic octree-building and compression framework is shown in~\algoref{alg:jointTreeBuildandCompress}.
%
\begin{algorithm}[t]
	\SetKwInOut{Input}{input}\SetKwInOut{Output}{output}
	\Input{Semantic point cloud \(\mathcal P\), G-tree search weights \((\beta,\gamma,\alpha)\in\Renp\times\Remp\times\Re_+\).}
	\Output{Compressed octree representation \(\T^*\) of \(\W\).}
	%
	\If{point cloud data recieved}{ \label{algLn:startSemanticTreeUpdate}
		\(x \leftarrow \text{\texttt{createOrUpdateNode}}(\mathcal P)\)\label{algLn:insertLfNd}\;
		\(n \leftarrow x\)\;
		\(G_{\Pi}(n;\beta,\gamma,\alpha) = 0\)\label{algLn:gValBoundary}\;
		\While{\(\text{\texttt{Parent}}(n) \neq \varnothing\)}
		{\label{algLn:startTreeRecurse}
			\(\bar n \leftarrow \text{\texttt{Parent}}(n)\)\;
			\eIf{\(\chd(\bar n)\subseteq \Nlf(\T_\W)\)}{\label{algLn:ifParentOfLeaf}
				\((p(y_i|\bar n),p(z_j|\bar n),p(\bar n)) \leftarrow \text{\texttt{getDist}}(\bar n)\)\;
			}
			{
				\((p(y_i|\bar n),p(z_j|\bar n),p(\bar n)) \leftarrow \text{\texttt{chdDist}}(\bar n)\)\;
			}
			
			\(G_{\Pi}(\bar n;\beta,\gamma,\alpha) \leftarrow \text{\texttt{updateGvals}}(\beta,\gamma,\alpha)\)\label{algLn:updateGvals}\;
			\(n \leftarrow \bar n\)\label{algLn:endSemanticTreeUpdate}\;
		\label{algLn:endTreeRecurse}}
		\(\T^* \leftarrow \text{\texttt{GtreeSearch}}(n_{\mathsf R})\)\label{algLn:treeCompress}\;
	}
	%
	\KwRet{\(\T^*\)}
	\caption{Joint semantic-tree construction and compression}
	\label{alg:jointTreeBuildandCompress}
\end{algorithm}
%
\begin{figure*}[t]
	\centering
	\subfloat[\label{fig:envSum1}]{\includegraphics[width=0.2437\textwidth]{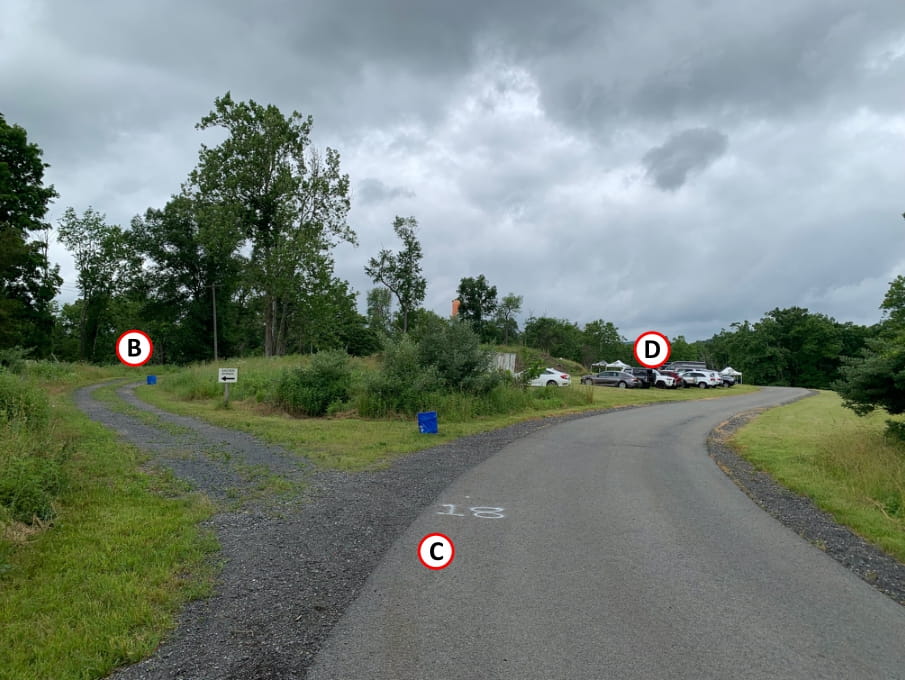}} 
	\hfill
	\subfloat[\label{fig:envSum2}]{\includegraphics[width=0.245\textwidth]{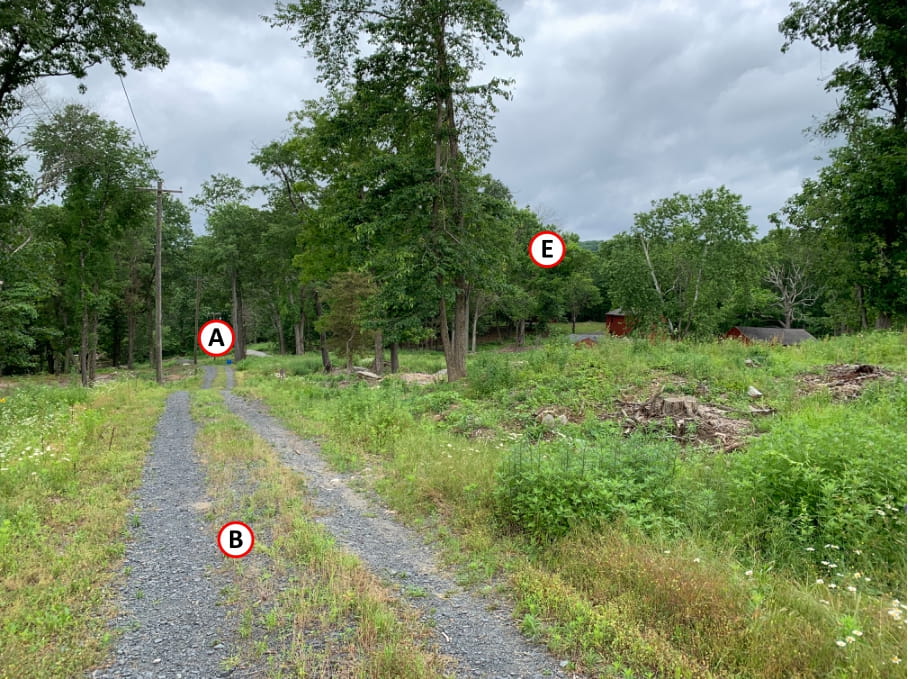}}
	\hfill
	\subfloat[\label{fig:envSum3}]{\includegraphics[width=0.244\textwidth]{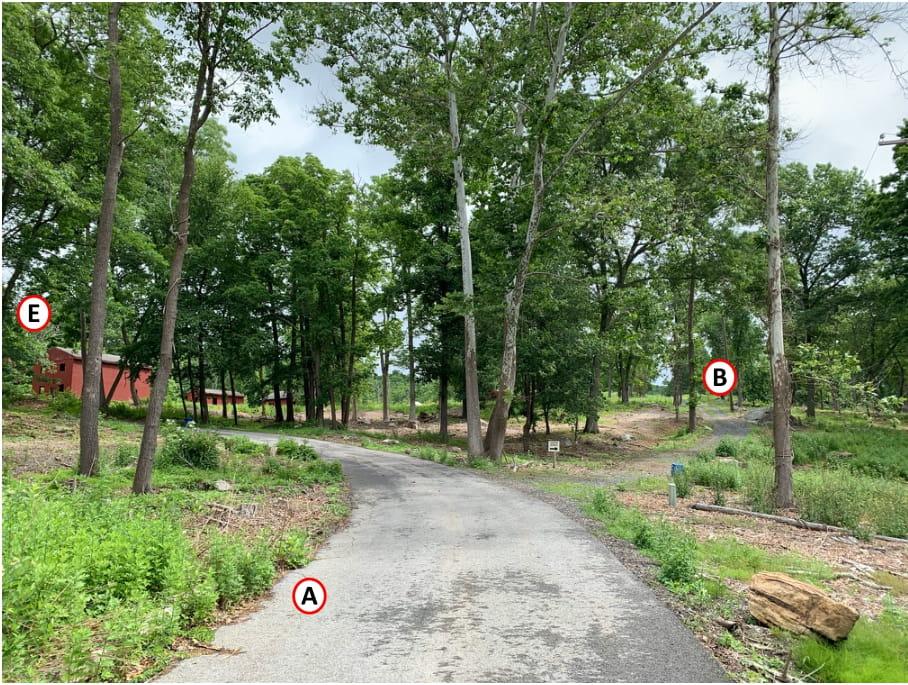}}
	\hfill
	\subfloat[\label{fig:envSum4}]{\includegraphics[width=0.245\textwidth]{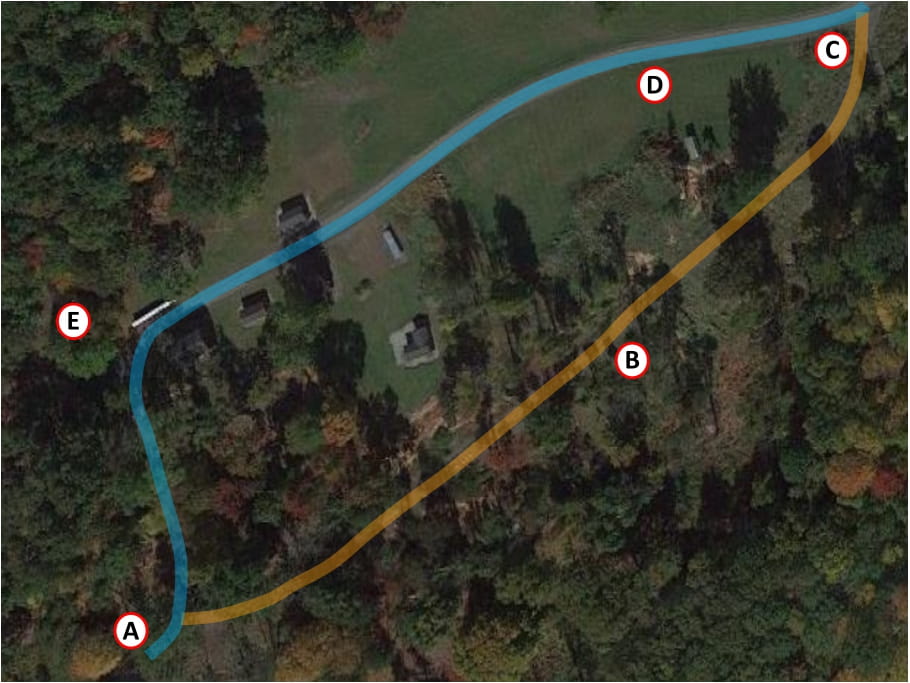}}
	\caption{Outdoor testing environment. (a)-(c) ground-level view from various perspectives. (d) top-down view, with asphalt and dirt/gravel road labeled in cyan and orange, respectively.}
	\label{fig:envSummary}
\end{figure*}
%
The update process (i.e., lines~\ref{algLn:startSemanticTreeUpdate}-\ref{algLn:endSemanticTreeUpdate}) is triggered by the availability of new semantic point-cloud data.
Moreover, the function \(\text{\texttt{createOrUpdateNode}}(\mathcal P)\) either creates or updates the semantic information of an existing leaf node of the octree \(\T_\W\) from the new data (see~\figref{fig:octree_update}).
Since incoming information is always inserted as a leaf of \(\T_\W\), we, in line~\ref{algLn:gValBoundary}, set the leaf-node condition for the G-values.
We then traverse the octree bottom-up in lines~\ref{algLn:startTreeRecurse}-\ref{algLn:endTreeRecurse}, visiting the sequence of node parents until the root node is reached, updating G-values and semantic distributions along the way (see~\figref{fig:octree_update}).
If the node \(\bar n\) is a parent of a leaf node, such that line~\ref{algLn:ifParentOfLeaf} is true, then we extract the full semantic distribution as detailed by Section~\ref{subsec:extracSemanticDistLeafs} for the children \(x\in\chd(\bar n)\).
This is done by the routine \(\text{\texttt{getDist}}(\bar n)\), before applying the recursive updates~\eqref{eq:parentChildRelConditional}-\eqref{eq:parentChildIrrelConditional}. 
In contrast, if \(\bar n\) is not a parent of a leaf, then \(p(y_i|n')\) and \(p(z_j|n')\) are known for all \(n'\in\chd(\bar n)\), and thus, \(\text{\texttt{chdDist}}(\bar n)\) retrieves the distributions of the children and applies~\eqref{eq:parentChildRelConditional}-\eqref{eq:parentChildIrrelConditional}, before updating the G-values in line~\ref{algLn:updateGvals} according to~\eqref{eq:localGfunction}.
We then call the tree-compression algorithm in line~\ref{algLn:treeCompress} by passing the root node of \(\T_\W\), denoted \(n_{\mathsf R}\), to G-tree search.
In the language employed at the start of this section, lines~\ref{algLn:insertLfNd}-\ref{algLn:endTreeRecurse} comprise the update pass (phase 1) and line~\ref{algLn:treeCompress} constitutes the octree compression step (phase 2).
Note that the bottom-up recursion defined by lines~\ref{algLn:startTreeRecurse}-\ref{algLn:endTreeRecurse} of~\algoref{alg:jointTreeBuildandCompress} is made possible by the following two observations.
First, at each time instance for which point cloud information is available, the semantic tree-building algorithm inserts (or updates) exactly one leaf node of the current octree representation of \(\W\).
Secondly, the function \(G_{\Pi}(n;\beta,\gamma,\alpha)\) and distributions \(p(y_i|n)\), \(p(z_j|n)\) and \(p(n)\) can all be updated from, and only depend on, immediate child information.
Thus, the nodes traversed by the recursion in lines~\ref{algLn:startTreeRecurse}-\ref{algLn:endTreeRecurse} of~\algoref{alg:jointTreeBuildandCompress} are precisely those whose G-values and semantic distributions are effected by the new perceptual data received (see~\figref{fig:octree_update}).
In contrast, the G-tree search method requires an inverse breadth-first recursion to compute G-values, and does not consider the availability of new semantic data.
%

\section{Real-world Experiments} \label{sec:experiments}

\begin{figure}[t]
    \centering
    \includegraphics[width=\linewidth]{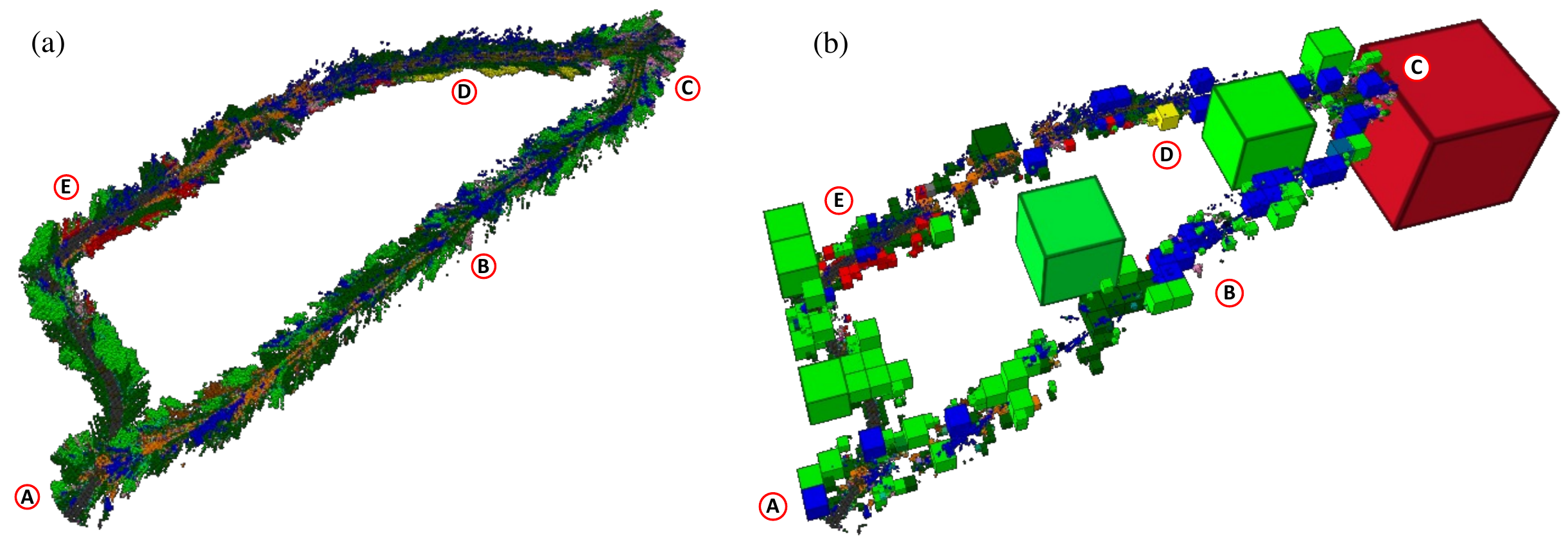}
    \caption{(a) finest-resolution semantic octree representation build from point-cloud data. (b) Compressed multi-resolution octree obtained from the framework in Section~\ref{sec:solnApproach}.}
    \label{fig:envTreeExample}
\end{figure}
%

%
In this section, we present and discuss results obtained from field-test experiments performed in the outdoor environment shown in~\figref{fig:envSummary}.
The environment contains a number of elevation changes, and a combination of urban and rural elements; See~\mfigref{fig:envSum1}{fig:envSum3}. 
The FCHarDNet classifier~\cite{fchardnet} pre-trained on RUGD dataset~\cite{rugd} is employed for semantic classification, providing 24 possible categorizations (i.e., \(\calK=\{0,\ldots,24\}\)).
Results are presented for two scenarios: (i) to study the semantic tree-building and compression algorithm detailed in Section~\ref{sec:solnApproach}, and (ii) to demonstrate how the abstractions can be employed in semantically-informed planning problems.
%

\subsection{Semantic Perception \& Tree Compression}\label{subsec:resultsPercepCompress}

A finest-resolution semantic octree \(\T_\W\) built from sensor data is shown alongside a compressed octree in~\figref{fig:envTreeExample}. 
The abstraction shown in~\figref{fig:envTreeExample}(b) is created by specifying asphalt as a relevant semantic class, while setting the classes of grass and trees to irrelevant (i.e., undesired). 
Thus, we see that regions of the map that contain paved road (e.g., A to E) retain high resolution as compared with areas that contain little to no paved road and greater amount of grass and trees (e.g., the portion from A to B to C). 
Furthermore, classes that are not relevant nor undesired (e.g., sky), are aggregated whenever doing so does not contribute to a loss of relevant information. 
%

\begin{figure}[t]
    \centering
    \includegraphics[width=0.9\linewidth]{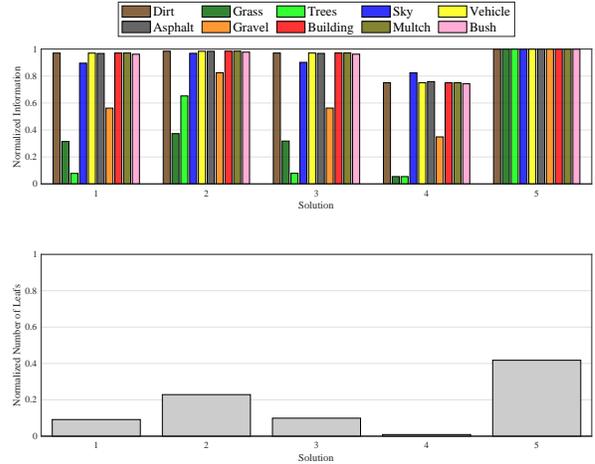}
    \caption{(top) Normalized degree of information retained for each semantic class as a function of the G-tree search weights (solution number). (bottom) Normalized number of leaf nodes of the compressed octree.}
    \label{fig:combInfoAndLeafs}
\end{figure}

\begin{figure}[t]
    \centering
    \includegraphics[width=0.65\linewidth]{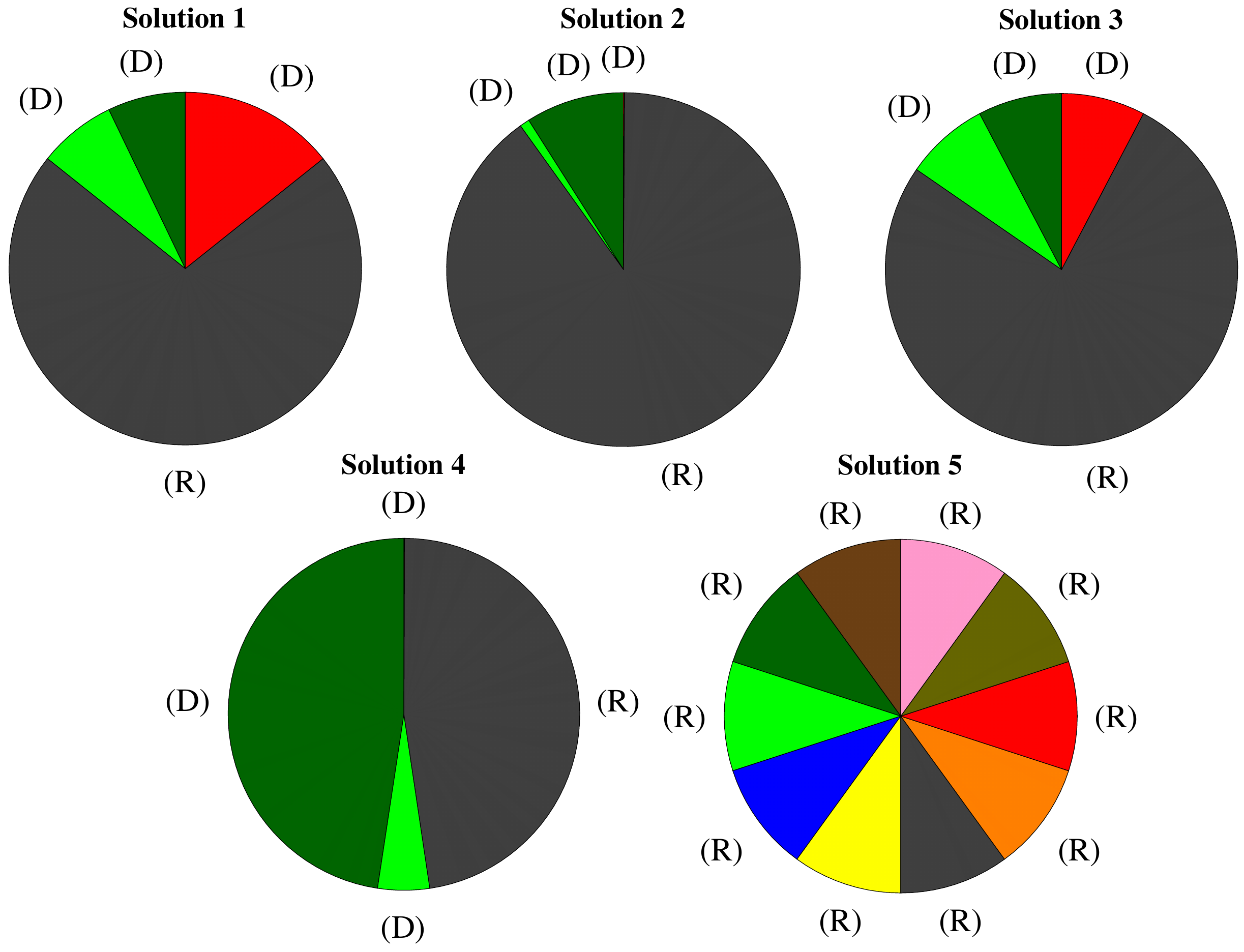}
    \caption{Weights for G-tree search. Colors correspond with the semantic class coloring, (R) is a relevant class, and (D) is an irrelevant one. In solutions 2 and 4, the (D) near the top corresponds to the red class (building).}
    \label{fig:relativeWeights}
\end{figure}

%
Next, consider~\figref{fig:combInfoAndLeafs}, which shows the normalized degree of information retained regarding each of the visible semantic classes in~\figref{fig:envTreeExample}(a) as a function of the G-tree search weights (solution number) depicted in~\figref{fig:relativeWeights}.
By comparing~\twoFigRef{fig:combInfoAndLeafs}{fig:relativeWeights} we make a few observations. 
First, observe that solution 5 considers all semantic classes as relevant, and thus the abstraction returned by G-tree search retains all information.
Secondly, we see the impact of the weights on the octree solution by considering solutions 2 and 4.
To this end, note that solution 4 contains considerably less information regarding the tree and grass classes as compared with solution 2, which occurs since the 4\(^\text{th}\) solution penalizes the retention of both grass and trees (dark and light green, respectively) to a much higher degree than solution 2. 
Notice also that solution 4 contains less information regarding \emph{all} classes compared with solution 2, since the priority of information removal outweighs the importance of information retention resulting with a more compressed octree, confirmed by~\figref{fig:combInfoAndLeafs}(bottom).
Moreover observe that the G-tree search method is able to find a compressed octree that retains most of the semantic information, as seen by solution 5.
To understand why this is the case, recall that node information \(\Delta I_{Y_i}(n)\) and \(\Delta I_{Z_j}(n)\) in the G-tree search method is quantified via the JS-divergence.
Thus, nodes with smaller values of \(\Delta I_{Y_i}(n)\) and \(\Delta I_{Z_j}(n)\) imply that the semantic distributions \(p(y_i|n)\) and \(p(z_j|n)\) are more similar to the distributions \(p(y_i|n')\) and \(p(z_j|n')\) of their children \(n'\in\chd(n)\) as compared with nodes having greater values of \(\Delta I_{Y_i}(n)\) and \(\Delta I_{Z_j}(n)\).
Contrast this with the ad-hoc pruning rule employed by the tree-building process of Section~\ref{subsec:semanticTree} (see~\figref{fig:octree_update}), which prunes nodes based on thresholds placed individual elements of the distributions \(p(y_i|n)\) and \(p(z_j|n)\). 
Consequently, it may happen that nodes pass the ad-hoc pruning test but contain little to no information due to the similarity of the semantic probability distributions of its immediate children. 
The G-tree search method is able to exploit these redundancies, leading to the high degrees of compression seen in~\figref{fig:combInfoAndLeafs}.
%

\subsection{Semantic Perception, Abstraction and Planning} \label{subsec:resultsCompressAndPlan}

Lastly, we discuss how the perception-abstraction framework developed in Section~\ref{sec:solnApproach} can be employed to construct a more (semantically) informed graph than conventional techniques (e.g., Halton sequence~\cite{Halton1964}) for use in colored graph-search planning algorithms.
We consider the Class-Ordered A* (COA*)~\cite{Lim2021} algorithm, which computes a semantically-informed path and allows for both desired (i.e., relevant) and undesired semantic classes to be specified. 
The COA* algorithm searches a weighted colored (semantic) graph to find the shortest path that contains the least number of edges in unwanted classes~\cite{Lim2021}.
Vertices in the search graph correspond to a two-dimensional coordinate and heading configuration of a non-holonomic ground robot, that is \((x, y, \theta)\), and edges represent the Reeds-Shepp curve~\cite{Reeds1990}.
Thus, all paths in the graph are dynamically feasible.
Graph vertices are classified (i.e., given a color) according to the semantic information contained in the (semantic) octree node corresponding to the volumetric region containing the \((x,y)\)-coordinate which the vertex represents.
Shown in~\figref{fig:abstracted_planning} are example paths obtained from employing COA* on colored (semantic) graphs generated by Halton sequences and from the semantic octree compression algorithm developed in Section~\ref{sec:solnApproach}, with road as a relevant (preferred) class for both planning and abstraction.
Note that the graph generated from a Halton sequence is agnostic to semantic information.
To provide quantitative results, we averaged planning results over 50 search instances for both methods. 
From our study, we observed that the optimal path was found about 10\% percent faster on average in the semantically informed (compressed) graph generated by the framework from Section~\ref{sec:solnApproach}.
Moreover, we noted that the standard deviation of the planning time was reduced by 60\% percent when the the semantically informed (compressed) graph was employed for semantic planning.
%
\begin{figure}[t]
	\centering
	\subfloat[\label{fig:non_abstract_planning}]{\includegraphics[width=0.4\linewidth]{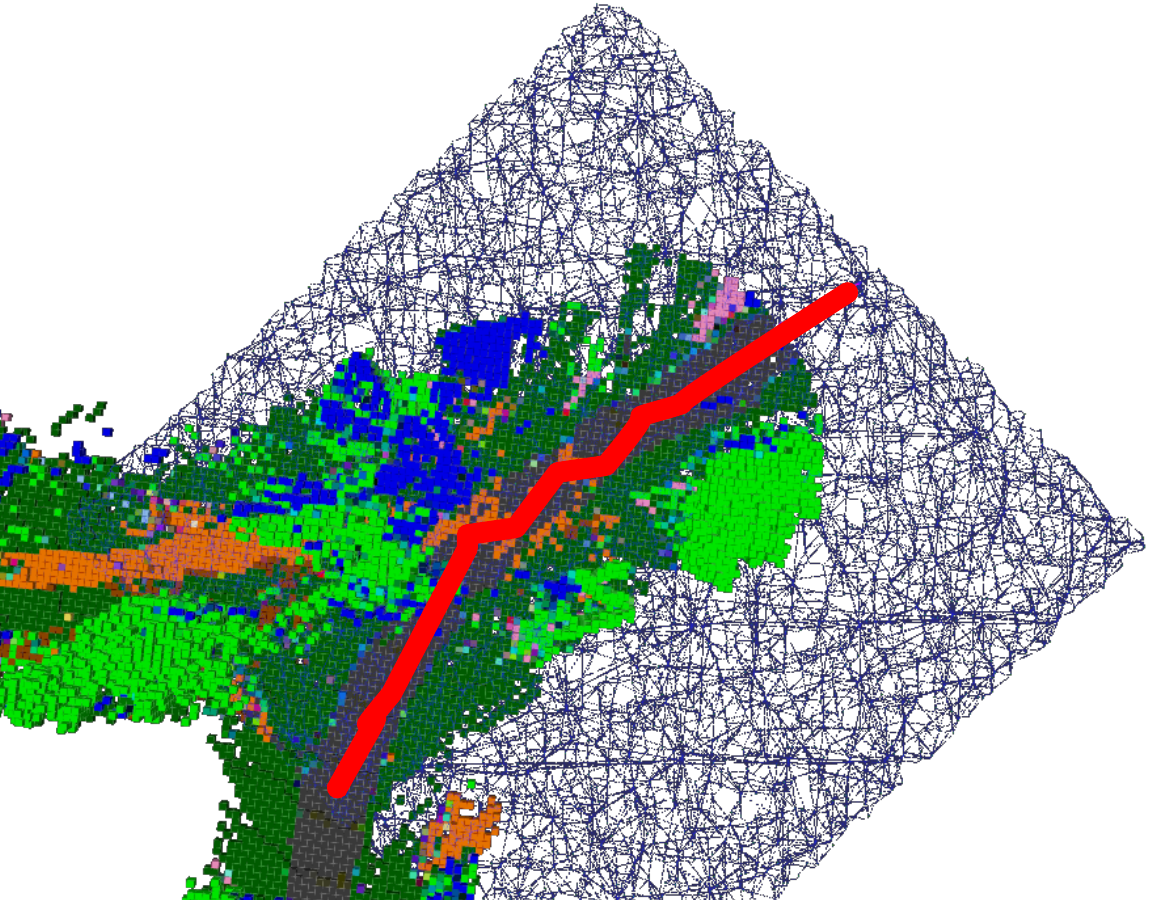}}
	\hfil
	\subfloat[\label{fig:abstract_planning}]{\includegraphics[width=0.4\linewidth]{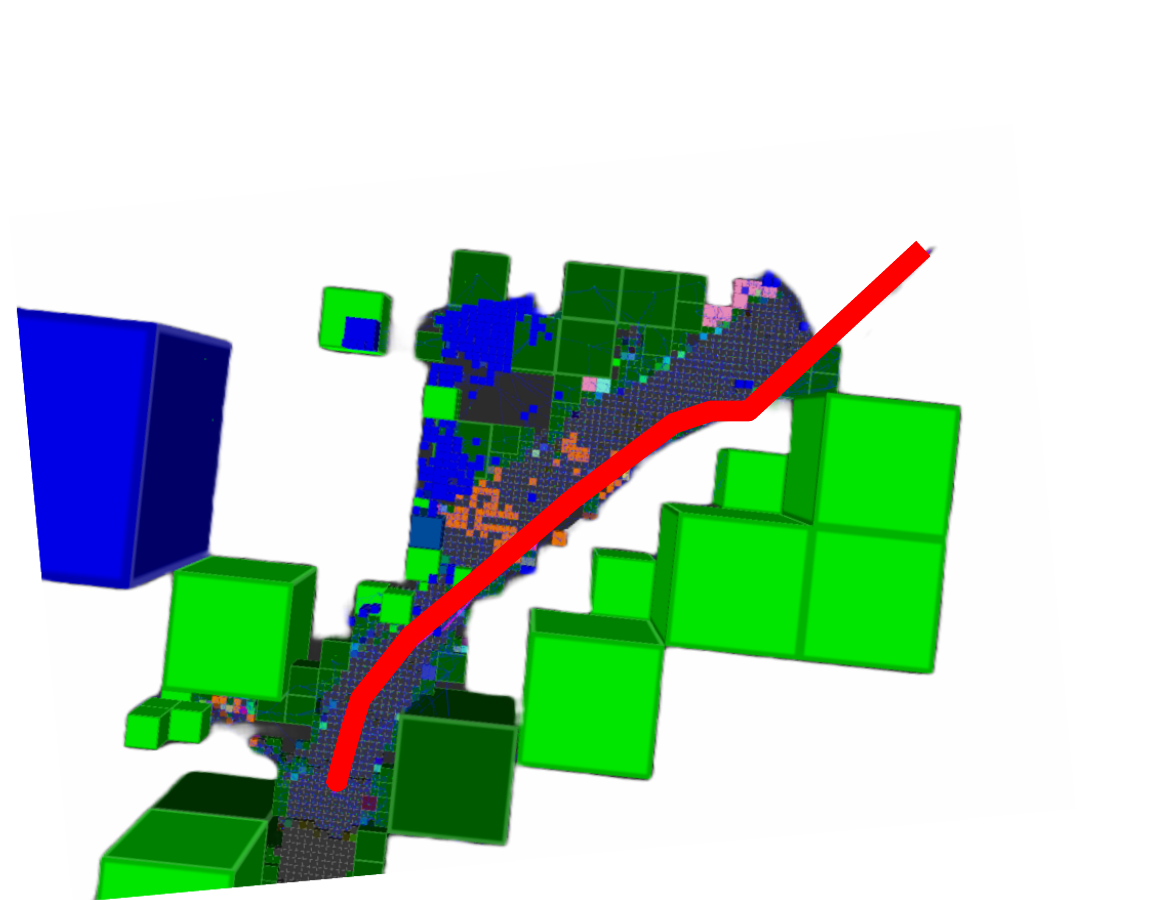}}
    \caption{(a) The optimal path found by COA* in the graph constructed with a Halton sequence. (b) The optimal path found by COA* in the graph constructed based on the compressed octomap nodes.}
    \label{fig:abstracted_planning}
\end{figure}

\section{Conclusions} \label{sec:conclusions}

In this paper, we considered the development of a joint semantic mapping and compression framework that simultaneously builds and compresses $3$-D probabilistic semantic octree representations.
Our framework consists of two parts: a Bayesian multi-class tree-building framework and information-theoretic tree-compression.
The developed framework is fully probabilistic and allows multi-resolution abstractions to be tailored to task-relevant and task-irrelevant semantic classes and information.
To demonstrate our approach, we compress large semantic maps built from real-world sensor data, and show how the abstractions can be used to improve the performance of planning algorithms over colored (semantic) graphs.
%

\section*{Appendix: Proof of Proposition~\ref{prop:localGFunction}}\label{sec:appendix}

\begin{proof}
To prove the proposition, we show that \(G(n;\beta,\gamma,\alpha) = p(n)G_{\Pi}(n;\beta,\gamma,\alpha)\) for all \(n\in\N(\T_\W)\).
There are two cases to consider for any \(n\in\Nint(\T_\W)\): \(p(n) = 0\) and when \(p(n) > 0\).
The first of these cases is straightforward, since by definition \(G(n;\beta,\gamma,\alpha) = 0\) and \(G_{\Pi}(n;\beta,\gamma,\alpha) = 0\).
Thus, \(G(n;\beta,\gamma,\alpha) = p(n)G_{\Pi}(n;\beta,\gamma,\alpha) = 0\) when \(p(n) = 0\).
We now show \(G(n;\beta,\gamma,\alpha) = p(n) G_{\Pi}(n;\beta,\gamma,\alpha)\) for any \(n\in\N(\T_\W)\) for which \(p(n) > 0\).
The proof is given by induction.
Consider any \(n\in\N(\T_\W)\) that is a parent of a leaf, then 
\begin{align*}
	&G(n;\beta,\gamma,\alpha) \\
	&= p(n) \max\{\sum_{i}\beta_i\JS^{Y_i}(n)-\sum_{j}\gamma_j\JS^{Z_j}(n) - \alpha H(\Pi),~0\},\\
	&= p(n) G_{\Pi}(n;\beta,\gamma,\alpha),
\end{align*}
which follows from the properties of maximum since \(p(n) \geq 0\).
Now consider any \(n\in\N_{k}(\T_\W)\) (i.e., any node at depth \(k\)), \(k \geq 1\), and assume the hypothesis holds for all \(n'\in\N_{k+1}(\T_\W)\) for which \(p(n') > 0\).
Then,
\begin{align*}
	&G(n;\beta,\gamma,\alpha) \\
	&~~~~= p(n) \max\{\sum_{i}\beta_i\JS^{Y_i}(n)-\sum_{j}\gamma_j\JS^{Z_j}(n) - \alpha H(\Pi)\\
	&~~~~~~~~~~~~~~~~~~~~~~~~~~~~~ + \frac{1}{p(n)}\sum_{n'\in \mathcal S} G(n';\beta,\gamma,\alpha),~0\},
\end{align*}
where \(\mathcal S = \{n'\in\chd(n): p(n') > 0\}\), \(\mathcal S \subseteq \N_{k+1}(\T_\W)\).
The quantity within the max operator can be written as
\begin{align*}
	&\max\{\sum_{i}\beta_i\JS^{Y_i}(n)-\sum_{j}\gamma_j\JS^{Z_j}(n) - \alpha H(\Pi)\\
	&~~~~~~~~~~~~~~~~~~~~~~~~~~~~~ + \frac{1}{p(n)}\sum_{n'\in\mathcal S} G(n';\beta,\gamma,\alpha),~0\},\\
	&= \max\{\sum_{i}\beta_i\JS^{Y_i}(n)-\sum_{j}\gamma_j\JS^{Z_j}(n) - \alpha H(\Pi)\\
	&~~~~~~~~~~~~~~~~~~ + \frac{1}{p(n)}\sum_{n'\in \chd(n)} p(n')G_{\Pi}(n';\beta,\gamma,\alpha),~0\},
\end{align*}
where the equality holds from the induction hypothesis and since, for \(n' \in \{\bar n \in \chd(n) : \bar n \not\in \mathcal S\}\), we have \(p(n')G_{\Pi}(n';\beta,\gamma,\alpha) = 0\) by definition, leading to \(G(n;\beta,\gamma,\alpha)=p(n)G_{\Pi}(n;\beta,\gamma,\alpha)\).
To show the proposition, we prove if \(G(n;\beta,\gamma,\alpha) > 0\) then \(G_{\Pi}(n;\beta,\gamma,\alpha) > 0\) and its converse.
Pick any \(n\in\N(\T_\W)\) and assume \(G(n;\beta,\gamma,\alpha) > 0\).
Then \(p(n) > 0\), and so \(G(n;\beta,\gamma,\alpha) =p(n) G_{\Pi}(n;\beta,\gamma,\alpha)\) implying \(G_{\Pi}(n;\beta,\gamma,\alpha) > 0\).
Repeating the steps for \(G_{\Pi}(n;\beta,\gamma,\alpha)\), we obtain the result. 
\end{proof}

\bibliographystyle{IEEEtran}

\begin{thebibliography}{10}
\providecommand{\url}[1]{#1}
\csname url@samestyle\endcsname
\providecommand{\newblock}{\relax}
\providecommand{\bibinfo}[2]{#2}
\providecommand{\BIBentrySTDinterwordspacing}{\spaceskip=0pt\relax}
\providecommand{\BIBentryALTinterwordstretchfactor}{4}
\providecommand{\BIBentryALTinterwordspacing}{\spaceskip=\fontdimen2\font plus
\BIBentryALTinterwordstretchfactor\fontdimen3\font minus
  \fontdimen4\font\relax}
\providecommand{\BIBforeignlanguage}[2]{{%
\expandafter\ifx\csname l@#1\endcsname\relax
\typeout{** WARNING: IEEEtran.bst: No hyphenation pattern has been}%
\typeout{** loaded for the language `#1'. Using the pattern for}%
\typeout{** the default language instead.}%
\else
\language=\csname l@#1\endcsname
\fi
#2}}
\providecommand{\BIBdecl}{\relax}
\BIBdecl

\bibitem{occ_mapping_1}
A.~Elfes, ``Using occupancy grids for mobile robot perception and navigation,''
  \emph{Computer}, vol.~22, no.~6, pp. 46--57, June 1989.

\bibitem{occ_mapping_2}
S.~Thrun, ``Learning occupancy grid maps with forward sensor models,''
  \emph{Autonomous Robots}, vol.~15, no.~2, pp. 111--127, September 2003.

\bibitem{quadtree}
Y.~Tian, K.~Wang, R.~Li, and L.~Zhao, ``A fast incremental map segmentation
  algorithm based on spectral clustering and quadtree,'' \emph{Advances in
  Mechanical Engineering}, vol.~10, no.~2, February 2018.

\bibitem{octomap}
A.~Hornung, K.~M. Wurm, M.~Bennewitz, C.~Stachniss, and W.~Burgard, ``Ocotomap:
  An efficient probabilistic 3d mapping framework based on octrees,''
  \emph{Autonomous Robots}, vol.~34, pp. 189--206, 2013.

\bibitem{isdf}
J.~Ortiz, A.~Clegg, J.~Dong, E.~Sucar, D.~Novotny, M.~Zollh\"{o}fer, and
  M.~Mukadam, ``{iSDF: Real-Time Neural Signed Distance Fields for Robot
  Perception},'' in \emph{Robotics: Science and Systems}, New York City, NY,
  USA, June 27-July 1, 2022.

\bibitem{sdf}
K.~Saulnier, N.~Atanasov, G.~J. Pappas, and V.~Kumar, ``Information theoretic
  active exploration in signed distance fields,'' in \emph{IEEE International
  Conference on Robotics and Automation (ICRA)}, Paris, France, May 31-August
  31, 2020, pp. 4080--4085.

\bibitem{search-and-rescue}
A.~Hong, O.~Igharoro, Y.~Liu, F.~Niroui, G.~Nejat, and B.~Benhabib,
  ``Investigating human-robot teams for learning-based semi-autonomous control
  in urban search and rescue environments,'' \emph{Journal of Intelligent \&
  Robotic Systems}, vol.~94, pp. 669--686, June 2019.

\bibitem{navigation-using-dense-map}
N.~T. Nguyen, L.~Schilling, M.~S. Angern, H.~Hamann, F.~Ernst, and
  G.~Schildbach, ``B-spline path planner for safe navigation of mobile
  robots,'' in \emph{IEEE/RSJ International Conference on Intelligent Robots
  and Systems (IROS)}, Prague, Czech Republic, September 27-October 1, 2021,
  pp. 339--345.

\bibitem{traversability-analysis}
P.~Fankhauser, M.~Bloesch, and M.~Hutter, ``Probabilistic terrain mapping for
  mobile robots with uncertain localization,'' \emph{IEEE Robotics and
  Automation Letters}, vol.~3, no.~4, pp. 3019--3026, October 2018.

\bibitem{zobeidi-GP}
E.~Zobeidi, A.~Koppel, and N.~Atanasov, ``Dense incremental metric-semantic
  mapping via sparse gaussian process regression,'' in \emph{IEEE/RSJ
  International Conference on Intelligent Robots and Systems (IROS)}, Las
  Vegas, NV, USA, October 25-29, 2020, pp. 6180--6187.

\bibitem{semantic-octomap}
A.~Asgharivaskasi and N.~Atanasov, ``{Semantic OcTree Mapping and Shannon
  Mutual Information Computation for Robot Exploration},'' \emph{arXiv
  preprint: 2112.04063}, 2021.

\bibitem{kimera}
A.~Rosinol, M.~Abate, Y.~Chang, and L.~Carlone, ``Kimera: an open-source
  library for real-time metric-semantic localization and mapping,'' in
  \emph{IEEE International Conference on Robotics and Automation (ICRA)},
  Paris, France, May 31-August 31, 2020, pp. 1689--1696.

\bibitem{Hauer2015}
F.~Hauer, A.~Kundu, J.~M. Rehg, and P.~Tsiotras, ``Multi-scale perception and
  path planning on probabilistic obstacle maps,'' in \emph{IEEE International
  Conference on Robotics and Automation}, Seattle, WA, USA, May 26-30, 2015,
  pp. 4210--4215.

\bibitem{Cowlagi2010}
R.~V. Cowlagi and P.~Tsiotras, ``Multi-resolution path planning: Theoretical
  analysis, efficient implementation, and extensions to dynamic environments,''
  in \emph{IEEE Conference on Decision and Control}, Atlanta, GA, USA, December
  15-17, 2010, pp. 1384--1390.

\bibitem{Larsson2021:AbsPlan}
D.~T. Larsson, D.~Maity, and P.~Tsiotras, ``Information-theoretic abstractions
  for planning in agents with computational constraints,'' \emph{IEEE Robotics
  and Automation Letters}, vol.~6, no.~4, pp. 7651--7658, October 2021.

\bibitem{Kambhampati1986}
S.~Kambhampati and L.~S. Davis, ``Multiresolution path planning for mobile
  robots,'' \emph{IEEE Journal of Robotics and Automation}, vol. RA-2, no.~3,
  pp. 135--145, September 1986.

\bibitem{lim2020mams}
J.~Lim and P.~Tsiotras, ``Mams-a*: Multi-agent multi-scale a*,'' in \emph{IEEE
  International Conference on Robotics and Automation (ICRA)}, Paris, France,
  May 31-August 31, 2020, pp. 5583--5589.

\bibitem{Boutilier1994}
C.~Boutilier and R.~Dearden, ``Using abstractions for decision-theoretic
  planning with time constraints,'' in \emph{AAAI National Conference on
  Artificial Intelligence}, Seattle, WA, USA, July 31-August 4, 1994, pp.
  1016--1022.

\bibitem{Kraetzschmar2004}
G.~K. Kraetzschmar, G.~P. Gassull, and K.~Uhl, ``Probabilistic quadtrees for
  variable-resolution mapping of large environments,'' \emph{IFAC Proceedings
  Volumes}, vol.~37, no.~8, pp. 675--680, July 2004.

\bibitem{Einhorn2011}
E.~Einhorn, C.~Schr\"{o}ter, and H.-M. Gross, ``Finding the adequate resolution
  for grid mapping - cell sizes locally adapting on-the-fly,'' in \emph{IEEE
  Conference on Robotics and Automation}, Shanghai, China, May 9-13, 2011, pp.
  1843--1848.

\bibitem{Nelson2018}
E.~Nelson, M.~Corah, and N.~Michael, ``Environment model adaptation for mobile
  robot exploration,'' \emph{Autonomous Robots}, vol.~42, pp. 257--272,
  February 2018.

\bibitem{Zucker2003}
J.~Zucker, ``A grounded theory of abstraction in artificial intelligence,''
  \emph{Philosophical Transactions of the Royal Society of London, Series B:
  Biological Sciences 358}, no. 1435, pp. 1293--1309, July 2003.

\bibitem{brooks1991intelligence}
R.~A. Brooks, ``Intelligence without representation,'' \emph{Artificial
  intelligence}, vol.~47, no. 1-3, pp. 139--159, January 1991.

\bibitem{Sacerdoti1974}
E.~D. Sacerdoti, ``Planning in a hierarchy of abstraction spaces,''
  \emph{Artificial Intelligence}, vol.~5, no.~2, pp. 115--135, June 1974.

\bibitem{Giunchiglia1992}
F.~Giunchiglia and T.~Walsh, ``A theory of abstraction,'' \emph{Artificial
  Intelligence}, vol.~57, no. 2-3, pp. 323--389, October 1992.

\bibitem{holte2003abstraction}
R.~C. Holte and B.~Y. Choueiry, ``Abstraction and reformulation in artificial
  intelligence,'' \emph{Philosophical Transactions of the Royal Society of
  London. Series B: Biological Sciences}, vol. 358, no. 1435, pp. 1197--1204,
  July 2003.

\bibitem{Ponsen2010}
M.~Ponsen, M.~E. Taylor, and K.~Tuyls, ``Abstraction and generalization in
  reinforcement learning: A summary and framework,'' in \emph{Adaptive and
  Learning Agents}.\hskip 1em plus 0.5em minus 0.4em\relax Springer Berlin
  Heidelberg, 2010, pp. 1--32.

\bibitem{Genewein2015}
T.~Genewein, F.~Leibfried, J.~Grau-Moya, and D.~A. Braun, ``Bounded
  rationality, abstraction, and hierarchical decision-making: An
  information-theoretic optimality principle,'' \emph{Frontiers in Robotics and
  AI}, vol.~2, no.~27, November 2015.

\bibitem{Tishby2010}
N.~Tishby and D.~Polani, ``Information theory of decisions and actions,'' in
  \emph{Perception-Action Cycle}.\hskip 1em plus 0.5em minus 0.4em\relax
  Springer New York, December 2010, pp. 601--636.

\bibitem{pacelli2019task}
V.~Pacelli and A.~Majumdar, ``Task-driven estimation and control via
  information bottlenecks,'' in \emph{International Conference on Robotics and
  Automation (ICRA)}, Montreal, Canada, May 20-24, 2019, pp. 2061--2067.

\bibitem{gallager1968information}
R.~G. Gallager, \emph{Information Theory and Reliable Communication}.\hskip 1em
  plus 0.5em minus 0.4em\relax Wiley, 1968.

\bibitem{Tishby1999:IB}
N.~Tishby, F.~C. Pereira, and W.~Bialek, ``The information bottleneck method,''
  in \emph{Allerton Conference on Communication, Control and Computing},
  Monticello, IL, USA, September 1999, pp. 368--377.

\bibitem{larsson2020q}
D.~T. Larsson, D.~Maity, and P.~Tsiotras, ``Q-tree search: an
  information-theoretic approach toward hierarchical abstractions for agents
  with computational limitations,'' \emph{IEEE Transactions on Robotics},
  vol.~36, no.~6, pp. 1669--1685, December 2020.

\bibitem{larsson2021HCinformation}
------, ``Information-theoretic abstractions for resource-constrained agents
  via mixed-integer linear programming,'' in \emph{Proceedings of the Workshop
  on Computation-Aware Algorithmic Design for Cyber-Physical Systems},
  Nashville, TN, USA, May 18, 2021, pp. 1--6.

\bibitem{larsson2022generalized}
------, ``A generalized information-theoretic framework for the emergence of
  hierarchical abstractions in resource-limited systems,'' \emph{Entropy},
  vol.~24, no.~6, June 2022.

\bibitem{Cover2006:ElementsITBook}
T.~M. Cover and J.~A. Thomas, \emph{Elements of Information Theory},
  2nd~ed.\hskip 1em plus 0.5em minus 0.4em\relax John Wiley \& Sons, 2006.

\bibitem{Lin1991}
J.~Lin, ``Divergence measures based on the {S}hannon entropy,'' \emph{IEEE
  Transactions on Information Theory}, vol.~37, no.~1, pp. 145--151, January
  1991.

\bibitem{Bondy1976}
J.~A. Bondy and U.~S.~R. Murty, \emph{Graph Theory with Applications}.\hskip
  1em plus 0.5em minus 0.4em\relax Macmillan Education {UK}, 1976.

\bibitem{fchardnet}
\BIBentryALTinterwordspacing
``Pingolh/fchardnet: Fully convolutional hardnet for segmentation in pytorch.''
  [Online]. Available: \url{https://github.com/PingoLH/FCHarDNet}
\BIBentrySTDinterwordspacing

\bibitem{rugd}
M.~Wigness, S.~Eum, J.~G. Rogers, D.~Han, and H.~Kwon, ``A rugd dataset for
  autonomous navigation and visual perception in unstructured outdoor
  environments,'' in \emph{International Conference on Intelligent Robots and
  Systems (IROS)}, Macau, China, November 4-8, 2019, pp. 5000--5007.

\bibitem{Halton1964}
J.~H. {Halton}, ``Algorithm 247: Radical-inverse quasi-random point sequence,''
  \emph{Communications of the {ACM}}, vol.~7, no.~12, pp. 701--702, December
  1964.

\bibitem{Lim2021}
J.~Lim and P.~Tsiotras, ``A generalized {A*} algorithm for finding globally
  optimal paths in weighted colored graphs,'' in \emph{IEEE International
  Conference on Robotics and Automation (ICRA)}, Xi'an, China, May 30-June 5
  2021, pp. 7503--7509.

\bibitem{Reeds1990}
J.~A. Reeds and L.~A. Shepp, ``Optimal paths for a car that goes both forwards
  and backwards,'' \emph{Pacific Journal of Mathematics}, vol. 145, no.~2, pp.
  367--393, October 1990.

\end{thebibliography}



\end{document}